\newcommand{\graph}{\mathbf{G}}
\newcommand{\score}[2]{\sigma_{#1}({#2})}
\newcommand{\vertex}[1]{V_{#1}}
\newcommand{\opt}{\mathit{OPT}}
\newcommand{\vertices}{\mathbb{V}}
\newcommand{\edges}{\mathbb{E}}
\newcommand{\parents}{\Pi}
\newcommand{\problem}{\mathit{\epsilon}\text{BNSL}}
\newcommand{\pen}[2]{t_{#1}({#2})}
\newcommand{\BibTeX}{\textsc{B\kern-0.1emi\kern-0.017emb}\kern-0.15em\TeX}
\begin{document}

\title{A Score-and-Search Approach to Learning Bayesian Networks with Noisy-OR Relations}


\author{\Name {Charupriya Sharma} \Email {c9sharma@uwaterloo.ca}
\and
\Name {Zhenyu A. Liao} \Email {z6liao@uwaterloo.ca}\\
\addr David R. Cheriton School of Computer Science, University of Waterloo, Canada
\and
\Name {James Cussens} \Email {james.cussens@bristol.ac.uk }\\
\addr Department of Computer Science, University of Bristol, UK
\and
\Name {Peter van Beek} \Email {vanbeek@uwaterloo.ca}\\
\addr David R. Cheriton School of Computer Science, 
University of Waterloo, Canada
}

\maketitle
\begin{abstract}
A Bayesian network is a probabilistic graphical model that consists of a directed acyclic graph (DAG), where each node is a random variable and attached to each node is a conditional probability distribution (CPD). A Bayesian network can be learned from data using the well-known score-and-search approach, and within this approach a key consideration is how to simultaneously learn the global structure in the form of the underlying DAG and the local structure in the CPDs. Several useful forms of local structure have been identified in the literature but thus far the score-and-search approach has only been extended to handle local structure in form of context-specific independence. In this paper, we show how to extend the score-and-search approach to the important and widely useful case of noisy-OR relations. We provide an effective gradient descent algorithm to score a candidate noisy-OR using the widely used BIC score and we provide pruning rules that allow the search to successfully scale to medium sized networks. Our empirical results provide evidence for the success of our approach to learning Bayesian networks that incorporate noisy-OR relations.
\end{abstract}
\begin{keywords}
Bayesian networks; structure learning; causal noisy-OR.
\end{keywords}
\section{Introduction}

Bayesian networks (BNs) are widely used probabilistic graphical models with applications in knowledge discovery, decision support, and prediction~\citep{Darwiche09,KollerF09}.
A BN can be learned from data using the well-known \emph{score-and-search} approach, where a scoring function is used to evaluate the fit of a proposed BN to the data in the space of directed acyclic graphs (DAGs). Current implementations of this approach such as~\citep{YuanMW11}, \citep{BartlettC13}, and~\citep{vanBeek2015} consider only conditional probability tables (CPTs) as representations for the underlying conditional probability distributions (CPDs) for discrete variables. However, the size of the CPT for a variable grows exponentially as the number of parents increases. For example, the CPT of a binary child node with $n$ binary parents requires $2^{n+1}$ probabilities. This presents a practical difficulty in parameter estimation and inference and has motivated many structured representations for CPDs that exploit the relationship between a child and its parents and aim at reducing model complexity.

A widely used local structure is the noisy-OR relation~\citep{good1961,Pearl1988} and its generalizations such as leaky noisy-OR~\citep{Henrion87} and noisy-MAX~\citep{diez93}. These relations model the CPD over causes (parents) and effects (children). The noisy-OR assumes a form of causal independence (CI) and allows one to specify a CPT with just $n$ parameters instead of $2^{n+1}$. \cite{zhang96exploiting}~derived variable elimination under CI and demonstrated the advantage of CI in inference. Besides CI, \cite{boutilier96}~proposed a decision tree model that captures context-specific independence (CSI). Later, \cite{chickering1997bayesian} extended the tree structure to decision graphs that encode equality constraints and~\cite{poole2003exploiting} derived a version of variable elimination under CSI. 
Despite showing advantages in inference, these studies---with the exception of \cite{chickering1997bayesian}---only consider the local structure of CPDs while assuming some fixed global structure; i.e., the underlying DAG for the BN is fixed and some or all of the CPTs are replaced with locally structured representations.

However, when some or all of the CPTs within some fixed global structure are replaced by locally structured representations with reduced complexity, the existing DAG structure is often not optimal or appropriate for the new representations anymore. Consider the Bayesian information criterion (BIC) that consists of the log likelihood of the data being generated by the model and a penalty for model complexity. The structured CPDs are likely to reduce the likelihood due to the so-called compression error~\citep{xiang2018compressing,zagorecki2013knowledge}, but they also have a smaller penalty as a result of using fewer parameters. These changes open up the opportunity for some alternative global structures to have better scores. Ideally, the learning algorithm should be able to choose, for example, between a CPD represented as a CPT with a smaller number of parents, and a CPD approximated as a noisy-OR with a larger number of parents. 

Assuming a fixed global structure may also lead to inaccuracies when assessing the effect of using structured representations. Compression error only measures the ability of a new representation to reproduce the original CPT, but that is not the goal of BNs. For example, the CPD modeled by noisy-OR may be different from the CPD by CPT, but with a different structure the former might be a better fit for the distribution of the data. Similarly, measuring inference error with a fixed structure is misleading. Failing to consider new structures to better accommodate alternative representations makes the false impression that we trade some posterior accuracy for reduced complexity, although in practice the posterior accuracy may even be improved with proper structure learning.

\cite{friedman1998learning}~are the first to incorporate local structures in Bayesian network structure learning (BNSL) with the \emph{score-and-search} approach. They show that using structured representations in hill-climbing allows the learning algorithm to explore more complex networks and thus avoids inferring incorrect conditional independence relations. The observation is also supported by~\cite{talvitie2019learning} in an exact search to find the optimal BN using a tree structure. Their experiments, albeit with some explicit structural constraints on the underlying DAG, suggest that structured CPDs can help the search algorithm find correct BNs with fewer samples, especially on real-world datasets. However, they also find that for some datasets CPT can still perform better. The discrepancy is likely attributed to the fact that all proposed structured representations are used separately as the sole representation for the CPDs. If the structured representations are compared with CPTs and are only used when appropriate, they can then better help the search algorithm to find the correct structure and maintain the complexity advantage in inference.

In this paper, we propose the first score-and-search approach for learning Bayesian networks with both CPT and noisy-OR relations as possible representations for CPDs. Importantly, we simultaneously learn both the global structure in the form of the underlying DAG and the local structure in the CPDs, we place no a priori constraints on the global structure, and we exactly determine all networks within a given factor of optimal. Our approach has two primary advantages. First, our approach only replaces a CPT with a noisy-OR relation when it is appropriate. Converting an arbitrary proportion of CPTs to structured representations can lead to significant degradation of the expressive power of the model, and it is difficult to determine the optimal proportion a priori. Our approach controls the degradation by specifying a Bayes factor (BF)~\citep{kass1995bayes} that measures how far a BN can deviate from the optimal network, and so only near-optimal networks with both CPTs and noisy-OR relations are learned in a principled manner. Second, our approach can scale to BNs of moderate sizes. Even local structure modelling with structured representations such as~\citep{xiang2019direct} suffers from a large search space. Our approach, on the other hand, can effectively prune most candidate parent sets of a variable by leveraging the results from learning BNs with CPTs given a BF~\citep{liao2019finding}. We empirically demonstrate that our approach can learn these \textit{mixed} BNs in a principled manner that takes advantage of a reduced complexity.

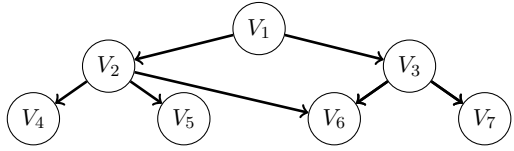
\begin{figure}[t!]
\centering
		\begin{tikzpicture}[every node/.style={circle, draw, scale=0.8, fill=gray!50}, scale=1.0, rotate = 180, xscale = -1]
		
		\node [fill=white](4) at ( 7,5.2) {$V_4$};
		\node [fill=white](5) at ( 9,5.2) {$V_5$};
		\node [fill=white](6) at ( 8,4.5) {$V_2$};
		\node [fill=white](7) at ( 11,5.2) {$V_6$};
		\node [fill=white](8) at ( 13,5.2) {$V_7$};
		\node [fill=white](9) at ( 12,4.5) {$V_3$};
		\node [fill=white](10) at ( 10,4) {$V_1$};
		
		\draw [<-, line width=1pt] (4) -- (6) ;
		\draw [<-, line width=1pt] (7) -- (6) ;
		\draw [<-, line width=1pt] (5) -- (6) ;
		\draw [<-, line width=1pt] (7) -- (9) ;
		\draw [<-, line width=1pt] (8) -- (9);
		\draw [<-, line width=1pt] (7) -- (9) ;
		\draw [<-, line width=1pt] (8) -- (9);
		\draw [<-, line width=1pt] (6) -- (10) ;
		\draw [<-, line width=1pt] (9) -- (10) ;
		
		\end{tikzpicture}
		\caption{Example Bayesian network: Each variable has the state space $\{0,1\}$. 
			 Consider the parent set of $V_6$, $\parents_6 = \{V_2, V_3\}$ The state space of $\parents_6$ is $\Omega_{\parents_6} = \{ \{0,0\}, \{0,1\},\{1,0\}, \{1,1\} \}.$ and $r_{\parents_6} = 4$. }
		\label{FIGURE:notation}
\end{figure}

%
%
\section{Background}
\label{SECTION:Background}

In this section, we review Bayesian networks~\citep{KollerF09,Darwiche09}, noisy-OR relations~\citep{good1961,Pearl1988} and the BIC scoring function~\citep{LamB94,Schwarz78}.

\subsection{Bayesian Networks}

A Bayesian network (BN) is a probabilistic graphical model that consists of a labeled directed acyclic graph (DAG), $\graph = (\vertices, \edges)$ in which the nodes $\vertices = \{V_{1}, \ldots, V_{n}\}$ correspond to random variables, the edges $\edges$ represent direct influence of one random variable on another, and each node $\vertex{i}$ is labeled with a conditional probability distribution $P(V_{i} \mid \Pi_{i})$ that specifies the dependence of the variable $V_{i}$ on its set of parents $\Pi_i$ in the DAG. A BN can alternatively be viewed as a factorized representation of the joint probability distribution over the random variables and as an encoding of the Markov condition on the nodes; i.e., given its parents, every variable is conditionally independent of its non-descendants.

In this paper, we assume that each random variable $V_i$ is binary. Each $\parents_{i}$ has state space of a set of candidate instantiations of the nodes in $\Pi_i$, $\Omega_{\parents_{i}} = \{\pi_{i1},\ldots,\pi_{i{r_{\parents_{i}}}}\}$.  We use $r_{\parents_{i}} = 2^{|\Pi_i|}$ to refer to the number of possible instantiations of the parent set $\parents_{i}$ of $V_i$ (see Figure \ref{FIGURE:notation}). 
The set $\theta = \{ \theta_{ijk} \}$ for all $ i = \{1,\ldots, n \}, j = \{1,\ldots,r_{\parents_{i}}\}$ and  $k = \{0, 1\}$ represents parameter estimates in $G$ obtained from a dataset, where each $\theta_{ijk}$ estimates the conditional probability $P(V_{i} = k \mid \Pi_{i} = \pi_{ij})$.
Given a node $V_i$ and a parent set $\Pi_i$, we define the set $\theta_i := \big\{ \theta_{ijk} \mid  j \in \{1,\ldots,r_{\parents_{i}}\}, k\in \{0,1\} \big\}$. We refer to $\theta_i$ as the \emph{full CPT} of node $i$.

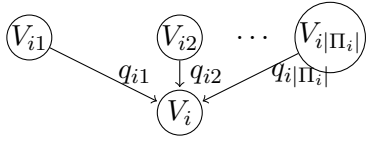
\begin{figure}[t!]
\centering
\begin{tikzpicture}[shorten >=1pt,->]
  \tikzstyle{vertex}=[circle,draw,minimum size=17pt,inner sep=0pt]
    \node[vertex] (v1) at (0,0) {$V_{i1}$};
    \node[vertex] (v2) at (2,0) {$V_{i2}$};
    \node (v3) at (3,0) {$\dots$};
    \node[vertex] (v4) at (4,0) {$V_{i|\Pi_i|}$};
    \node[vertex] (w) at (2,-1) {$V_i$};
    \draw (v2) -- node[auto,auto] (Q2) {$q_{i2}$} (w);
    \draw (v1) -- (w);
    \draw (v4) -- (w);
    \node[xshift=-0.6cm] (q1) at (Q2.west) {$q_{i1}$};
    \node[xshift=+0.9cm] (q3) at (Q2.east) {$q_{i|\Pi_i|}$};
\end{tikzpicture}
\caption{
    Causal structure for a Bayesian network with a noisy-OR 
    relation, where the set of causes ${\Pi_i} := \{V_{i1}$, \ldots, $V_{i|\Pi_i|}\}$ 
    leads to effect $V_i$ and there is a noisy-OR relation at node $V_i$.
}\label{FIGURE:causal-structure}
\end{figure}

The predominant method for Bayesian network structure learning (BNSL) from data is the \emph{score-and-search} method. Let $\graph$ be a DAG over random variables $\vertices$, and let $I = \{I_1, \ldots, I_N\}$ be a dataset, where each instance $I_i$ is an $n$-tuple that is a complete instantiation of the variables in $\vertices$. A \emph{scoring function} $\sigma( \graph \mid I )$ assigns a real value measuring the quality of $\graph$ given the data $I$. Without loss of generality, we assume that a lower score represents a better quality network structure. To simplify notation, we use $\sigma(\graph)$ in place of $\sigma(\graph \mid I)$ when the data is clear from context. 
In this paper, we focus on solving the problem of $\epsilon$-Bayesian Network Structure Learning ($\problem$) \citep{liao2019finding}.

\begin{definition}\label{DEFN:eBNSL}
	Given a non-negative constant $\epsilon$, a dataset $I = \{I_1, \ldots, I_N\}$ 
	over random variables $\vertices$ = $\{V_{1}$, \ldots, $V_{n}\}$ and a scoring 
	function $\sigma$, the $\epsilon$-Bayesian Network Structure Learning ($\problem$) 
	problem is to find all \textbf{ credible networks}, which are all networks that have a score $\score{}{\graph}$ such that $\opt \leq \score{}{\graph \mid I} \leq \opt + \epsilon$, where $\opt$ is the score of the optimal Bayesian network.
\end{definition}

It has been shown in~\citep{liao2019finding} that a good choice for $\epsilon$ is $\log BF$. By specifying the constant $\epsilon$ in terms of a Bayes factor, we can control the level of tolerance for network degradation and learn all near-optimal networks with both CPTs and noisy-OR relations as best determined by the trade-off between the fit with the data and the complexity of the model.

\subsection{BIC/MDL Scoring Function}
\label{SECTION:BIC_scoring}


In this work, we focus on the Bayesian information criterion (BIC) scoring function. As the BIC function is decomposable, when the  $\theta_i$ is given we can associate a score to a candidate parent set $\Pi_i$ of $\vertex{i}$ as follows,
\begin{equation}\label{eq:bic_local}
     BIC : \score{}{\Pi_i} =  - L(\theta_i) + t({\Pi_i})\cdot w,
\end{equation} 
where the formula consists of a term measuring the likelihood of the candidate parent set given the data and a penalty term for the number of parameters needed to specify the full CPT for the candidate parent set. Here, $L (\theta_i) =\sum_{j=1}^{r_{\Pi_i}}\sum_{k \in \{0,1\}}^{} n_{ijk} \log \theta_{ijk}$, $n_{ijk}$ is the number of instances in dataset $I$ where $V_{i}=k$ and $\Pi=\pi_{ij}$ co-occur, and $t(\Pi_i) = 2^{|\Pi_i|}$. The penalty term is weighted by $w=\log(N)/2$ where $N$ is the number of instances in dataset $I$. Note that $\score{}{\graph} = \sum_{i=1}^n \score{}{\Pi_i}$. We use the natural logarithm throughout the paper.

\subsection{Patterns for CPTs: Noisy-OR }
\label{SECTION:Patterns_for_CPTs}

With the noisy-OR relation one assumes that there are a set of causes ${\Pi_i} := \{V_{i1}$, \ldots, $V_{i|{\Pi_i}|}\}$   leading to an effect $V_i$, where $V_i, V_{ij} \in \vertices$ for all $j \in \{1,...|\Pi_i|\}$ and $V_i \notin \Pi_i$ (see Figure~\ref{FIGURE:causal-structure}). Each cause $V_{ij} \in \Pi_i$ is either present or absent, and each $V_{ij}$ in isolation is likely to cause $V_i$ and the likelihood is not diminished if more than one cause is present. Further, one assumes that all possible causes are given and when all causes are absent, the effect is absent. Finally, one assumes that the mechanism or reason that inhibits a $V_{ij}$ from causing $V_i$ is independent of the mechanism or reason that inhibits a $V_{ij'}$, $j' \not= j$, from causing $V_i$.

For a node $V_i$ and parent set $\Pi_i$, a noisy-OR relation specifies a CPT using $|\Pi_i|$ parameters, $\mathbf{q}_i = q_{i1}, \ldots, q_{i|\Pi_i|}$, one for each parent, where $q_{ij}$ is the probability that $V_i$ is false given that $V_{ij}$ is true and all of the other parents are false,
\begin{equation*}
P( V_i = 0 \mid V_{ij} = 1, V_{ij'} = 0_{[\forall j', j'\neq j]}) = q_{ij}.
\end{equation*}

\noindent
From these parameters, the full CPT representation of size
$2^{n+1}$ can be generated using,
  \begin{equation}\label{eq:noisy-OR-CPT}
    \phi_{ij0}=
    \begin{cases*}
      \text{$\prod_{j \in T_x}{ q_{ij}}$}  & \text{if $T_x \neq \{\}$} \\
         1     & \text{otherwise,}
    \end{cases*}
  \end{equation}
where $T_x = \{j \mid V_{ij} = 1\}$.
The last condition (when $T_x$ is empty) corresponds to the assumptions that all possible causes are given and that when all causes are absent, the effect is absent; i.e., $P(V_i = 0 \mid V_{i1}=0, \ldots, V_{i|{\Pi_i}|} =0) = 1$. Of course, $\phi_{ij1} = 1 - \phi_{ij0}$. The set $\phi_i := \big\{ \phi_{ijk} \mid  j \in \{1,\ldots,r_{\parents_{i}}\}, k\in \{0,1\} \big\}$ is referred to as the \emph{noisy-OR CPT} of node $i$. 

The above assumptions are not as restrictive as may first appear. One can always introduce an additional random variable $V_{i0}$ that is a parent of $V_{i}$ but itself has  no parents. The variable $V_{i0}$ represents all of the other reasons that could cause $V_{i}$ to occur. The node $V_{i0}$ and the prior probability $P(V_{i0})$ are referred to as a \emph{leak node} and the \emph{leak probability}, respectively. In this work we assume that all the causes are known. 

\section{Our Solution}
\label{SECTION:OurSolution}

In this section, we  present our \emph{score-and-search} approach for learning all Bayesian networks, given local scores, that are within a given factor $\epsilon$ of optimal, where the networks can contain both full CPT and noisy-OR relations as possible representations for the CPDs. In general, a score-and-search approach scores candidate parent sets for the nodes in the network and searches for the choice of a parent set, one for each node, that leads to the best overall score while ensuring that the network is acyclic. Before presenting our overall approach for solving $\problem$ (Section~\ref{sec:overall-algorithm}), we first describe an effective gradient descent algorithm to score a candidate noisy-OR relation using the widely used BIC score (Section~\ref{sec:noisy-OR-scoring}) and pruning rules that allow the search to scale to larger networks (Section~\ref{sec:pruning}).

\subsection{BIC Score for Noisy-OR Relations}\label{sec:noisy-OR-scoring}

The BIC score consists of a maximum likelihood term and a penalty term. We present a gradient descent algorithm that is based on minimizing a KL divergence as it is known that minimizing the KL divergence results in maximizing the likelihood (see, e.g., \cite{murphy2012}). Recall that the elements of $\theta_i$ are conditional probabilities computed from the dataset $I$. Given a node $V_i$, we must compute maximum likelihood estimates for the noisy-OR CPT $\mathbf{\phi}_i$ for every candidate parent set $\Pi_i$, such that the conditional KL divergence between the full CPT $\theta_i$ and the resulting noisy-OR CPT $\phi_i$ that is determined by the $\mathbf{q}_i$ (see Equation~\ref{eq:noisy-OR-CPT}), is minimized. 
Note that the KL divergence between two \emph{conditional} probability distributions, $P(A|B)$ and $Q(A|B)$ is given by,
\begin{equation}\nonumber
D_{\mathit{KL}} (P(A|B) \mid \mid Q(A|B))
	 = \sum_{b \in B}P(B=b) \sum_{a \in A} P(A=a|B=b)\log\frac{ P(A=a|B=b)}{ Q(A=a|B=b)} .
\end{equation}
We note that an alternative approach to estimate noisy-OR parameters is to maximize the log-likelihood using the expectation-maximization (EM) technique, which was derived in \cite{Dempster1977} and applied to noisy-OR in \cite{vomlel2006}. We perform an experimental comparison of the two approaches in Section~\ref{SECTION:Experimental}.


To derive our gradient descent algorithm, we begin with the definition of KL divergence for the two conditional probability distributions, $\theta_i$ and $\phi_i$, and rewrite it into a more convenient form:
\begin{align*}
	\ D_{\mathit{KL}} (\theta_i \mid \mid \phi_i)
	&\stackbin[]{0}{=} \sum_{j=1}^{r_{\Pi_i}}P(\pi_{ij}) \sum_{k \in \{0,1\}} \theta_{ijk}\log\frac{ \theta_{ijk}}{ \phi_{ijk}}\\
	&\stackbin[]{1}{=}\sum_{j=1}^{r_{\Pi_i}}\frac{n_{ij}}{N} \sum_{k \in \{0,1\}} \theta_{ijk}\log\frac{ \theta_{ijk}}{ \phi_{ijk}}\\
	&\stackbin[]{2}{=}\frac{1}{N} \sum_{j=1}^{r_{\Pi_i}}\sum_{k \in \{0,1\}} n_{ijk} \cdot \theta_{ijk}\log\frac{ \theta_{ijk}}{ \phi_{ijk}}\\
	&\stackbin[]{3}{=}\frac{1}{N} \sum_{j=1}^{r_{\Pi_i}}\sum_{k \in \{0,1\}} n_{ijk} \cdot \theta_{ijk}\log{ \theta_{ijk}} 
	-\frac{1}{N} \sum_{j=1}^{r_{\Pi_i}}\sum_{k \in \{0,1\}} n_{ijk} \cdot \theta_{ijk}\log{ \phi_{ijk}} ,
\end{align*}
where $N$ is the number of instances in our dataset. To find $\phi_i$ such that $D_{\mathit{KL}} (\theta_i \mid \mid \phi_i)$ is minimized, note that the first term in Step 3 is constant. So, we must determine,
\begin{align*}
    \underset{\mathbf{q_i}}{\operatorname{argmin}}\ D_{\mathit{KL}} (\theta_i \mid \mid \phi_i) 
    &= - \sum_{j=1}^{r_{\Pi_i}}\sum_{k \in \{0,1\}} n_{ijk} \cdot \theta_{ijk}\log{ \phi_{ijk}} ,
\end{align*}
where the $\mathbf{q}_i$ that minimizes the KL divergence are the maximum likelihood estimates for $\phi_i$ that are determined by the $\mathbf{q}_i$ (Equation~\ref{eq:noisy-OR-CPT}).
The penalty term in the BIC score can be computed in constant time; specifically, the number of parents in the candidate parent set. Thus, fitting these noisy-OR parameters gives us the BIC score for the noisy-OR for a candidate parent set. To find these noisy-OR parameters, we use Algorithm 1 which performs gradient descent for the derivative,
\begin{equation}\label{eq:gradient}
    \Delta_{\mathit{KL}}^{\mathbf{q}_i} = \frac{d}{d\mathbf{q}_i} \sum_{j=1}^{r_{\Pi_i}}\sum_{k \in \{0,1\}} n_{ijk} \cdot \log{ \phi_{ijk}} .
\end{equation}
We start with an initial guess for the set of noisy-OR parameters $\mathbf{q}_i$ and evaluate term $\Delta_{\mathit{KL}}^{\mathbf{q}_i}$ for these values (Equation~\ref{eq:gradient}). The initial guess uses hot starts in that the solution for a smaller candidate parent set is used as the starting point when estimating the parameters for a candidate set that is a superset. We perform gradient descent over $\mathbf{q}_i$, where each step update is found by a simple geometric line search algorithm (see Algorithm~\ref{alg:algorithm}). Geometric line search is a backtracking line search procedure, where we first choose a descent direction and then determine the maximum amount to move along that direction.

\begin{algorithm}[tb]
\caption{Computing Noisy-OR Parameters for a Candidate Parent Set}
\label{alg:algorithm}
\textbf{Input}: Node $V_i$, candidate set $\Pi_i$, a dataset $I$ of $N$ instances.\\
\textbf{Parameter}: Threshold $t$, maximum iterations $\mathit{maxIter}$\\
\textbf{Output}: A set of noisy-OR parameters : $\mathbf{q}_i = q_{i1}, \ldots, q_{i|\Pi_i|}$
\begin{algorithmic}[1] 
\STATE Initialize $\mathbf{q}_i = q_{i1}, \ldots, q_{i|\Pi_i|}=hotstarts()$
\STATE Initialize $l=0, \mathbf{mq}_i = \mathbf{q}_i, \delta=\infty$
\WHILE{$l < \mathit{maxIter}$}
\STATE $\mathbf{q}^{\prime}_i = \mathbf{q}_i$
\STATE $step = \mathit{GeometricLineSearch}(\mathbf{q}^{\prime}_i , \Delta_{\mathit{KL}}^{\mathbf{q}^{\prime}_i})$
\STATE $\mathbf{q}_i = \mathbf{q}^{\prime}_i  - step* \Delta_{\mathit{KL}}^{\mathbf{q}^{\prime}_i}$
\STATE $\delta{q}_i =\Delta_{\mathit{KL}}^{\mathbf{q}_i} - \Delta_{\mathit{KL}}^{\mathbf{q}^{\prime}_i}  $
\IF {$\delta{q}_i < \delta$}
\STATE $ \mathbf{mq}_i = \mathbf{q}_i$
\STATE $\delta = \delta{q}_i$
\ENDIF
\IF {$\delta{q}_i < t$} 
\STATE \textbf{break}
\ENDIF
\STATE $l = l + 1$
\ENDWHILE
\STATE \textbf{return} $\mathbf{mq}_i$
\end{algorithmic}
\end{algorithm}



\subsection{Pruning Rules} \label{sec:pruning}

To find all near-optimal BNs given an approximating factor $\epsilon$ for a dataset $I$, we propose to compute two different sets of local scores for each node. The first set is the BIC scores when the conditional probability distributions for the candidate parents sets are represented by full CPTs. The second set is the BIC scores when the conditional probability distributions for the candidate parent sets are represented  by noisy-OR relations. However, computing the local scores for all nodes is quite cost prohibitive---we would need a set of $n\cdot2^{n-1}$ local scores for each of the two BIC scores. A solution is to prune the search space of candidate parent sets, provided that global optimality constraints of the full network structure are not violated. 
Adopting the terminology of \cite{liao2019finding}, we say that a candidate parent set $\Pi_i$ can be \textit{safely pruned} given a non-negative constant $\epsilon \in \mathbb{R}^+$ if $\Pi_i$ cannot be the parent set of $V_i$ in any network in the set of credible networks (see Definition~\ref{DEFN:eBNSL}).
For computing BIC scores for full CPTs, we employ the following two pruning rules given by \cite{liao2019finding} to find all near-optimal Bayesian networks given an approximating factor $\epsilon$. 

\begin{lemma}
	Given a node $\vertex{j}$, candidate parent sets
	$\Pi_j$ and  $\Pi_j^{\prime}$, and some $\epsilon\in \mathbb{R}^+$,  if $\Pi_j \subset \Pi_j^{\prime}$ and $\score{}{\Pi_j} + \epsilon \leq \score{}{\Pi_j'}$,
	$\Pi_j^{\prime}$ can be safely pruned. \label{lem:scoreprune}
\end{lemma}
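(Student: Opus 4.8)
The plan is to show that if $\Pi_j \subset \Pi_j'$ and $\score{}{\Pi_j} + \epsilon \leq \score{}{\Pi_j'}$, then replacing $\Pi_j'$ by $\Pi_j$ as the parent set of $V_j$ in any network either preserves acyclicity and strictly improves (or at least does not worsen by more than $\epsilon$) the global score, so $\Pi_j'$ cannot be the parent set of $V_j$ in any credible network. First I would fix an arbitrary DAG $\graph$ in which $V_j$ has parent set $\Pi_j'$, and form $\graph'$ from $\graph$ by deleting all edges into $V_j$ and adding edges from each node in $\Pi_j$ to $V_j$. Since $\Pi_j \subset \Pi_j'$, every edge in $\graph'$ into $V_j$ was already present in $\graph$, so $\graph'$ is a subgraph of $\graph$ (on the same vertex set) and is therefore acyclic; thus $\graph'$ is a legal DAG.

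Next I would use decomposability of the BIC score, $\score{}{\graph} = \sum_{i=1}^n \score{}{\Pi_i}$, which is stated in Section~\ref{SECTION:BIC_scoring}. The only node whose parent set changed is $V_j$, so $\score{}{\graph} - \score{}{\graph'} = \score{}{\Pi_j'} - \score{}{\Pi_j} \geq \epsilon$, using the hypothesis. Hence $\score{}{\graph'} \leq \score{}{\graph} - \epsilon$. In particular, $\graph'$ is strictly better than $\graph$ by at least $\epsilon$.

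Finally I would conclude by contradiction. Suppose $\graph$ were a credible network, i.e. $\opt \leq \score{}{\graph} \leq \opt + \epsilon$ (Definition~\ref{DEFN:eBNSL}). Then $\score{}{\graph'} \leq \score{}{\graph} - \epsilon \leq \opt$. Since $\opt$ is by definition the minimum score over all DAGs, we must have $\score{}{\graph'} = \opt$, and moreover $\score{}{\graph} = \opt + \epsilon$ with $\score{}{\Pi_j'} - \score{}{\Pi_j} = \epsilon$ exactly; but then $\graph$ and $\graph'$ have the same score only in the boundary case, and a short check of the inequalities shows $\graph$ can only be credible when every inequality is tight. Depending on whether the paper intends ``safely pruned'' to tolerate this measure-zero boundary, one either notes that the boundary case forces $\score{}{\graph} = \opt+\epsilon$ and $\score{}{\Pi_j}+\epsilon = \score{}{\Pi_j'}$ and argues $\graph'$ already witnesses everything $\graph$ does, or one simply observes the hypothesis with strict inequality removes the boundary. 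Either way, $\Pi_j'$ is not needed as a parent set of $V_j$ in the set of credible networks, which is exactly the claim.

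The main obstacle is the boundary case $\score{}{\Pi_j} + \epsilon = \score{}{\Pi_j'}$, where $\graph$ and $\graph'$ tie at the worst admissible score: here one must be careful about the exact definition of ``safely pruned'' — whether it means ``appears in no credible network'' strictly, or ``can be discarded without losing any credible network up to score-equivalence.'' I expect the cleanest route is the latter reading (discarding $\Pi_j'$ loses no achievable credible score, since $\graph'$ attains a score at least as good), which makes the lemma go through with the non-strict inequality as stated; the rest is just bookkeeping with decomposability and the definition of $\opt$.
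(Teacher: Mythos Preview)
The paper does not actually prove this lemma; it is quoted from \cite{liao2019finding} without proof, so there is no in-paper argument to compare against. Your argument is the standard one and is correct: replacing $\Pi_j'$ by the subset $\Pi_j$ preserves acyclicity (edge deletion cannot create a cycle), and decomposability of the score gives $\score{}{\graph'} \le \score{}{\graph} - \epsilon$, so any DAG using $\Pi_j'$ is dominated by one using $\Pi_j$ by at least $\epsilon$, hence cannot lie in the credible band above $\opt$.

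Your caution about the boundary case $\score{}{\Pi_j}+\epsilon=\score{}{\Pi_j'}$ is well placed: under the paper's literal definition of ``safely pruned'' (``cannot be the parent set of $V_j$ in \emph{any} credible network''), a DAG $\graph$ with $\score{}{\graph}=\opt+\epsilon$ using $\Pi_j'$ is technically credible, so the lemma as stated with a non-strict inequality is slightly too strong. The intended reading --- that discarding $\Pi_j'$ loses no credible \emph{score}, since $\graph'$ matches or beats $\graph$ --- is the one that makes the statement true, and you identify this correctly. For a fully rigorous write-up you can simply note that every credible score achievable with $\Pi_j'$ is also achievable with $\Pi_j$, which is what the downstream search actually needs.
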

\begin{theorem}
    Given a node $\vertex{j}$,  candidate parent sets
	$\Pi_j$ and $\Pi_j'$, and some $\epsilon \in \mathbb{R}^+$,
	if $\Pi_j \subset \Pi_j'$ and $\score{}{\Pi_j} -  \pen{}{\Pi_j'} + \epsilon < 0$,
	$\Pi_j'$ and all supersets of $\Pi_j^{\prime}$ can be safely pruned  if $\sigma$ is the BIC scoring function.
	\label{thm:scoreprune2}
\end{theorem}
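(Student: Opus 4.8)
The plan is to show that if $\Pi_j \subset \Pi_j'$ and $\score{}{\Pi_j} - \pen{}{\Pi_j'} + \epsilon < 0$, then for \emph{every} superset $\Pi_j'' \supseteq \Pi_j'$ we have $\score{}{\Pi_j} + \epsilon \le \score{}{\Pi_j''}$, so that Lemma~\ref{lem:scoreprune} immediately licenses pruning $\Pi_j''$ (and in particular $\Pi_j'$ itself, taking $\Pi_j'' = \Pi_j'$). Thus the theorem reduces to a single inequality chain once we exploit the structure of the BIC score.

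First I would isolate the two ingredients of $\score{}{\Pi_j''} = -L(\theta_{j}^{\Pi_j''}) + \pen{}{\Pi_j''}\cdot w$. For the likelihood term, I would use the standard monotonicity fact that the maximized log-likelihood is non-decreasing as parents are added: since adding parents only enlarges the family of representable conditional distributions, $L(\theta_j^{\Pi_j''}) \ge 0$ in fact (the log-likelihood term here is $\sum n_{ijk}\log\theta_{ijk}$ with the $\theta$ the empirical conditionals, which is a sum of $n_{ijk}\log(\text{probability}) \le 0$, so $-L \ge 0$); more precisely $-L(\theta_j^{\Pi_j''}) \ge 0$ for any parent set. Hence $\score{}{\Pi_j''} \ge \pen{}{\Pi_j''}\cdot w = 2^{|\Pi_j''|}w$. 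For the penalty term, since $\Pi_j' \subseteq \Pi_j''$ we have $2^{|\Pi_j''|} \ge 2^{|\Pi_j'|}$, so $\pen{}{\Pi_j''} \ge \pen{}{\Pi_j'}$, giving $\score{}{\Pi_j''} \ge \pen{}{\Pi_j'}\cdot w$.

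Then I would close the loop: by hypothesis $\score{}{\Pi_j} + \epsilon < \pen{}{\Pi_j'}$, and one needs this against $\pen{}{\Pi_j'}\cdot w$ rather than $\pen{}{\Pi_j'}$ — so the argument needs $w \ge 1$, i.e. $\log(N)/2 \ge 1$, equivalently $N \ge e^2 \approx 7.39$, which holds for any nontrivial dataset (I would either note this mild assumption explicitly or, more carefully, observe that the relevant comparison in \cite{liao2019finding}'s framework already uses the weighted penalty $\pen{}{\cdot}$ including $w$, in which case the hypothesis should be read as $\score{}{\Pi_j} - \pen{}{\Pi_j'}\cdot w + \epsilon < 0$ and no side condition on $N$ is needed). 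Combining, $\score{}{\Pi_j} + \epsilon < \pen{}{\Pi_j'}\cdot w \le \score{}{\Pi_j''}$, so $\score{}{\Pi_j} + \epsilon \le \score{}{\Pi_j''}$, and Lemma~\ref{lem:scoreprune} applied to the pair $\Pi_j \subset \Pi_j''$ shows $\Pi_j''$ is safely prunable.

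The main obstacle I anticipate is pinning down the exact convention for the penalty term — whether $\pen{}{\cdot}$ in the theorem statement already absorbs the weight $w$ or not — since the clean bound $-L \ge 0$ only gives $\score{}{\Pi_j''} \ge \pen{}{\Pi_j''}\cdot w$, and the hypothesis as literally written compares $\score{}{\Pi_j}$ against the \emph{unweighted} $\pen{}{\Pi_j'}$. Resolving this bookkeeping (and the consequent $N \ge e^2$ caveat, which is the honest statement if the penalty is unweighted) is the only subtle point; everything else is the monotonicity of $-L$ in the parent set, the monotonicity of $2^{|\cdot|}$, and a direct appeal to Lemma~\ref{lem:scoreprune}.
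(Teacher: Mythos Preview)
The paper does not actually prove this theorem: both Lemma~\ref{lem:scoreprune} and Theorem~\ref{thm:scoreprune2} are stated without proof as pruning rules imported from \cite{liao2019finding}, so there is no in-paper argument to compare your proposal against.

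On its own merits your argument is the standard one and is correct. The only substantive step is the observation that $-L(\theta_j^{\Pi_j''}) \ge 0$ (each summand $n_{ijk}\log\theta_{ijk}$ is nonpositive), which gives $\score{}{\Pi_j''} \ge t(\Pi_j'')\cdot w \ge t(\Pi_j')\cdot w$ for every $\Pi_j'' \supseteq \Pi_j'$; combining with the hypothesis and invoking Lemma~\ref{lem:scoreprune} on the pair $\Pi_j \subset \Pi_j''$ finishes the proof. Your aside about monotonicity of the maximized log-likelihood in the parent set is true but not actually used --- the sign bound $-L\ge 0$ alone suffices. You are also right to flag the bookkeeping issue: in this paper's notation $\pen{}{\Pi_j'} = t(\Pi_j')$ is the unweighted count $2^{|\Pi_j'|}$ (the BIC penalty in Equation~\eqref{eq:bic_local} is $t(\Pi_i)\cdot w$), so taking the hypothesis literally the chain only closes when $w\ge 1$, i.e.\ $N\ge e^2$. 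The paper's own proof of the subsequent noisy-OR pruning theorem writes the analogous condition with the weighted penalty $t(\Pi_i)\cdot w$, which supports your suspicion that the intended reading here is the weighted one and no side condition on $N$ is needed.
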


\noindent
For computing BIC scores for noisy-OR relations, we introduce two new pruning rules. 

\begin{lemma}
A candidate parent set $\Pi_i$ of a node $V_i$ that is consistently instantiated to zero throughout the dataset whenever the node is set to one can be safely pruned. 
\end{lemma}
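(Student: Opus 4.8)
The plan is to show that, under the stated condition, every candidate network in which $V_i$ has the noisy-OR parent set $\Pi_i$ receives an infinite BIC score, whereas $\opt$ is finite; hence no such network belongs to the set of credible networks of Definition~\ref{DEFN:eBNSL}, so $\Pi_i$ can be safely pruned from the candidate parent sets of $V_i$ that are scored with a noisy-OR relation. First I would isolate the one structural fact that drives the argument: letting $\pi_{ij^{*}}$ denote the all-zero instantiation of $\Pi_i$ (the one with $T_x=\emptyset$), Equation~\ref{eq:noisy-OR-CPT} gives $\phi_{ij^{*}0}=1$ and hence $\phi_{ij^{*}1}=1-\phi_{ij^{*}0}=0$, \emph{regardless} of the values of the noisy-OR parameters $\mathbf{q}_i$. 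Equivalently, a noisy-OR relation at $V_i$ forces $P(V_i=1\mid \Pi_i=\mathbf{0})=0$, as there is no leak node.

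Next I would invoke the hypothesis. Since $V_i$ is set to one somewhere in the dataset and, by assumption, $\Pi_i$ is instantiated to all zeros in every instance in which $V_i=1$, the count $n_{ij^{*}1}$ (all-zero instantiation, $k=1$) satisfies $n_{ij^{*}1}\ge 1$. Hence the noisy-OR log-likelihood $L(\phi_i)=\sum_{j}\sum_{k} n_{ijk}\log\phi_{ijk}$ contains the term $n_{ij^{*}1}\log\phi_{ij^{*}1}=n_{ij^{*}1}\log 0 = -\infty$, while every other summand is non-positive because $\phi_{ijk}\le 1$; so $L(\phi_i)=-\infty$ for \emph{every} choice of $\mathbf{q}_i$, and the best attainable noisy-OR BIC score equals $-L(\phi_i)$ plus the finite complexity penalty $|\Pi_i|\cdot w$, i.e. $\score{}{\Pi_i}=+\infty$. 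This conclusion is independent of how the parameters are actually fit (by the gradient-descent/KL procedure of Algorithm~\ref{alg:algorithm} or by EM): it is a structural impossibility, since the noisy-OR model class assigns probability $0$ to an observed data point and therefore its maximum log-likelihood is genuinely $-\infty$.

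Finally I would close the argument: $\opt$ is finite because, for instance, any DAG in which all CPDs are represented by full CPTs has a finite BIC score (each local score $-L(\theta_i)+t(\Pi_i)\cdot w$ is finite), and $\opt$ is the minimum over all candidate networks. Since no local score can equal $-\infty$, any network $\graph$ in which $V_i$ has the noisy-OR parent set $\Pi_i$ has $\score{}{\graph}=\sum_i \score{}{\Pi_i}=+\infty>\opt+\epsilon$, so by Definition~\ref{DEFN:eBNSL} it is not credible and $\Pi_i$ is safely pruned. I do not expect a genuine obstacle; the points needing care are (i) the extended-real bookkeeping around $\log 0$, i.e. checking that no $+\infty$ term can cancel the $-\infty$, and (ii) the degenerate reading in which $V_i$ is never set to one, where the hypothesis is vacuous, $V_i$ is constant, the empty parent set already achieves log-likelihood $0$, and $\Pi_i$ is already eliminated by the earlier rules (Lemma~\ref{lem:scoreprune}, Theorem~\ref{thm:scoreprune2}). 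Note also that the rule pertains to $\Pi_i$ itself and need not extend to supersets, since an added parent may take value one when $V_i=1$.
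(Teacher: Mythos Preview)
Your argument is correct and considerably more rigorous than the paper's own proof, which consists of a single informal sentence: ``The candidate parent set $\Pi_i$ cannot explain $V_i$ in this configuration as there is no instance in the data file to indicate that $\Pi_i$ affects the values of $V_i$.'' The paper does not spell out the mechanism by which this leads to safe pruning in the sense of Definition~\ref{DEFN:eBNSL}; you do. Your key observation---that the noisy-OR structural constraint $\phi_{ij^{*}1}=0$ at the all-zero parent instantiation forces $L(\phi_i)=-\infty$ whenever $n_{ij^{*}1}\ge 1$, independently of $\mathbf{q}_i$---is exactly what makes the pruning safe, and it ties the conclusion directly to the credible-set definition via $\score{}{\graph}=+\infty>\opt+\epsilon$. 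What your route buys is a genuine proof rather than a heuristic: you verify finiteness of $\opt$, you handle the extended-real bookkeeping, and you explicitly flag the degenerate case where $V_i$ is never one and the edge case that supersets need not be prunable. The paper's one-liner buys brevity but leaves all of this implicit.
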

\begin{proof}
The candidate parent set $\Pi_i$ cannot explain $V_i$ in this configuration as there is no instance in the data file to indicate that $\Pi_i$ affects the values of $V_i$.
\end{proof}

\begin{theorem}
 Given a node $\vertex{j}$ and some $\epsilon\in \mathbb{R}^+$, a candidate parent set $\Pi_i$ with its penalty term greater than the sum of the score of the null parent set and $\epsilon$ can be safely pruned.
\end{theorem}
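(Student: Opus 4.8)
The plan is to combine the decomposability of the BIC score with a cheap lower bound on the local noisy-OR score. Recall that saying $\Pi_i$ can be \emph{safely pruned} means precisely that $\Pi_i$ is the parent set of $V_i$ in no credible network, so I would argue by contradiction: suppose $\graph$ is a credible network in which $V_i$ has parent set $\Pi_i$, scored as a noisy-OR relation, so that $\opt \le \score{}{\graph} \le \opt + \epsilon$ by Definition~\ref{DEFN:eBNSL}.

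From $\graph$ I would build a witness network $\graph'$ by deleting every edge into $V_i$, so that in $\graph'$ the node $V_i$ has the empty (null) parent set; since only edges are removed, $\graph'$ is still a DAG. As the BIC score is decomposable and $\graph, \graph'$ differ only in the family of $V_i$, we get $\score{}{\graph} - \score{}{\graph'} = \score{}{\Pi_i} - \score{}{\emptyset}$, where the right-hand side is the difference of the local scores at $V_i$ — the first being the noisy-OR score of $\Pi_i$, the second the score of the null model (the marginal of $V_i$). The crux is then a lower bound on $\score{}{\Pi_i}$: writing it as $-L(\phi_i)$ plus the noisy-OR penalty $|\Pi_i|\cdot w$, I would observe that every entry $\phi_{ijk}$ of the noisy-OR CPT lies in $[0,1]$ and every count $n_{ijk}$ is non-negative, so each summand $n_{ijk}\log\phi_{ijk} \le 0$ (with the convention $0\log 0 = 0$); hence $L(\phi_i)\le 0$ and $\score{}{\Pi_i} \ge |\Pi_i|\cdot w$. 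The theorem's hypothesis — that the penalty term $|\Pi_i|\cdot w$ exceeds $\score{}{\emptyset} + \epsilon$ — now gives $\score{}{\Pi_i} \ge |\Pi_i|\cdot w > \score{}{\emptyset} + \epsilon$, so $\score{}{\graph} - \score{}{\graph'} > \epsilon$, i.e. $\score{}{\graph} > \score{}{\graph'} + \epsilon \ge \opt + \epsilon$, contradicting the credibility of $\graph$. Hence no such $\graph$ exists, and $\Pi_i$ can be safely pruned.

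The argument is short, and I expect the only delicate points to be bookkeeping rather than real mathematics: first, being explicit that the ``score of the null parent set'' is taken under the ordinary (marginal) representation, since a noisy-OR with zero causes is degenerate — it would force $P(V_i = 0) = 1$ and thus make the null score ill-behaved whenever $V_i$ is ever observed to be $1$; and second, justifying $-L(\phi_i) \ge 0$ directly from the fact that the $\phi_{ijk}$ are genuine probabilities, so that the bound does not rely on anything about the output of the gradient-descent procedure in Algorithm~\ref{alg:algorithm}.
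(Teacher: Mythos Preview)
Your argument is correct and is essentially the paper's proof. Both hinge on the single observation that the log-likelihood term is non-positive, so the local score is bounded below by its penalty term; combined with the hypothesis this gives $\score{}{\Pi_i} > \score{}{\emptyset} + \epsilon$, and pruning follows. The only cosmetic difference is that the paper packages the last step as an application of Lemma~\ref{lem:scoreprune} (taking $\Pi_j = \emptyset$, which is a subset of every candidate), whereas you unfold that lemma inline via the witness network $\graph'$ and the decomposability of BIC. Your extra remarks about the null-set score being the marginal model and the bound not depending on Algorithm~\ref{alg:algorithm} are sound clarifications that the paper leaves implicit.
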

\begin{proof}
The null set is a subset of all candidate parent sets and by Lemma \ref{lem:scoreprune} any candidate parent set with a score exceeding the score of the null parent set can be safely pruned. 
Consider the definition of BIC for a parent set $\Pi_i$ for node $V_i$,
  $  \score{}{\Pi_i} = - L(\theta_i) + t({\Pi_i})\cdot w$.
Let us have a candidate parent set with 2 or more parents, and with its penalty term greater than the score of the null parent set for $V_i$, $\score{i}{\{\}}$. Such a parent set will score lower than the null parent set as log-likelihood is negative and can be safely pruned; i.e., $t({\Pi_i})\cdot w > \score{i}{\{\}} + \epsilon \Rightarrow -L(\theta_i)  + t({\Pi_i})\cdot w  >  \score{i}{\{\}}  + \epsilon\Rightarrow  \score{}{\Pi_i}  >  \score{i}{\{\}}+ \epsilon$.
\end{proof}

\subsection{\texorpdfstring{Algorithm for $\problem$}{Algorithm}}\label{sec:overall-algorithm}

Here we give our overall algorithm for $\problem$, a principled way to automatically select between full CPTs and noisy-OR relations, given a dataset and an approximation factor $\epsilon$.

\begin{itemize}
\item%
\textbf{Step 1.} Determine the BIC scores when fitting a full CPT for all candidate parent sets that could not be pruned with pruning rules from \citep{liao2019finding} using Equation \ref{eq:bic_local}.
\item%
\textbf{Step 2.} Determine the BIC scores when fitting a noisy-OR relation for all candidate parent sets that could not be pruned using our pruning rules in Section \ref{sec:pruning}. Here, the noisy-OR parameters are fit using Algorithm \ref{alg:algorithm}, which minimizes the KL divergence between the full-CPT and the noisy-OR CPT. These parameters are used to compute the noisy-OR BIC score. 
\item%
\textbf{Step 3.} Merge these two score sets, using pruning rules Lemma \ref{lem:scoreprune} and Theorem \ref{thm:scoreprune2}, into a list of scores for candidate parent sets for each node in the dataset. During merging scores of a node $V_j$, we have to examine only cases where a candidate parent set $\Pi_j$ belongs to the set of BIC scores and its superset $\Pi_j'$ belongs to the noisy-OR BIC scores and vice versa.%
\item
\textbf{Step 4.} The scores obtained in Step 3 are used to learn the set of credible networks using a developmental version of GOBNILP \citep{cussens2015gobnilp}, $gobnilp{\_dev}$ \citep{liao2019finding}, which can be used to solve the $\problem$ problem and collect all the networks in the credible set for a given approximation factor. 
\end{itemize}
%

\section{Experimental Evaluation}
\label{SECTION:Experimental}

In this section, we show the accuracy of Algorithm \ref{alg:algorithm}\footnote{Code available at \url{https://github.com/CharupriyaSharma/eBNSLNoisyOR}}  in computing the noisy-OR parameters for synthetic BNs with embedded noisy-OR relations. We also show significant presence of noisy-OR relations in standard benchmark networks. Finally, we test the performance of our learned networks against ground truth networks. 
All experiments are conducted on computers with 2.2 GHz Intel E7-4850V3 CPUs. Each experiment is limited to 64 GB of memory and 24 hours of CPU time.

\begin{table}[t]
\small
\begin{tabular}{cc}
\begin{tabular}{@{}crrrrrr@{}}
\toprule
 Parent &  \multicolumn{2}{c}{$N=100$} &  
    \multicolumn{2}{c}{$N=500$} &
    \multicolumn{2}{c@{}}{$N=1000$} \\
 Size & 
    \multicolumn{1}{c}{KL} & \multicolumn{1}{c}{EM} & 
    \multicolumn{1}{c}{KL} & \multicolumn{1}{c}{EM} & 
    \multicolumn{1}{c}{KL} & \multicolumn{1}{c@{}}{EM} \\
 \midrule
2 & 0.16 & 1.07 & 0.07 & 1.07 & 0.05 & 1.00 \\
3 & 0.21 & 1.18 & 0.09 & 1.11 & 0.07 & 1.07 \\
4 & 0.27 & 1.04 & 0.11 & 1.27 & 0.07 & 1.26 \\
5 & 0.25 & 1.50 & 0.11 & 1.54 & 0.08 & 1.57 \\
6 & 0.34 & 1.99 & 0.16 & 2.04 & 0.10 & 2.06 \\
7 & 0.41 & 2.09 & 0.24 & 2.03 & 0.16 & 1.99 \\
\bottomrule
\end{tabular}
&
\begin{tabular}{@{}crrrrrr@{}}
\toprule
 Parent &  \multicolumn{2}{c}{$N=100$} &  
    \multicolumn{2}{c}{$N=500$} &
    \multicolumn{2}{c@{}}{$N=1000$} \\
 Size & 
    \multicolumn{1}{c}{KL} & \multicolumn{1}{c}{EM} & 
    \multicolumn{1}{c}{KL} & \multicolumn{1}{c}{EM} & 
    \multicolumn{1}{c}{KL} & \multicolumn{1}{c@{}}{EM} \\
 \midrule
2 & 0.04 & 0.05 & 0.01 & 0.01 & 0.00 & 0.01 \\
3 & 0.13 & 0.32 & 0.02 & 0.05 & 0.01 & 0.03 \\
4 & 0.33 & 1.32 & 0.06 & 0.24 & 0.03 & 0.12 \\
5 & 1.02 & 4.91 & 0.18 & 1.20 & 0.07 & 0.55 \\
6 & 1.33 & 9.02 & 0.33 & 3.48 & 0.16 & 2.06 \\
7 & 2.54 & 12.08 & 1.07 & 11.68 & 0.60 & 6.55 \\
\bottomrule
\end{tabular}
\end{tabular}
\caption{(Left) Median relative error in noisy-OR parameters and (right) median conditional KL divergence of noisy-OR CPTs learned by Algorithm~\ref{alg:algorithm}, denoted KL, and the expectation-maximization algorithm, denoted EM, from ground truth for various parent set sizes.}
\label{TABLE:RelativeErrorAndKL}
\end{table}

\subsection{Recovery of Noisy-ORs in Synthetic Datasets}

To evaluate the accuracy of Algorithm~\ref{alg:algorithm} in finding the noisy-OR parameters and minimizing conditional KL divergence, we used synthetic BNs which consisted of a single noisy-OR. The parent set sizes were in the range \{2, \ldots, 7\}, all parent nodes had priors of 0.5, and the noisy-OR parameters $\mathbf{q} = q_{1}, \ldots, q_{|\Pi|}$ in the ground truth were uniformly sampled from the set \{0.01, 0.02, \ldots, 0.99\}. Thirty tests were performed at each parent set size.

We randomly generated datasets from the synthetic BNs with 100, 500 and 1000 instances, respectively. Algorithm~\ref{alg:algorithm} was applied to a dataset and the noisy-OR parameters estimated by the algorithm were compared against the parameters in the ground truth network (see Table \ref{TABLE:RelativeErrorAndKL}). As well, the conditional KL divergence was computed between the noisy-OR CPT for the estimated parameters and the noisy-OR CPT for the ground truth parameters (see Equation~\ref{eq:noisy-OR-CPT}). We also compared our results against the expectation-maximization algorithm for noisy-OR proposed by \cite{vomlel2006}, the code for which was supplied by the author. 
As shown in Table \ref{TABLE:RelativeErrorAndKL}, Algorithm \ref{alg:algorithm} estimated the ground truth parameters with significantly higher accuracy than the EM algorithm. Algorithm \ref{alg:algorithm} also had much lower conditional KL divergence. 


\subsection{Experiments on Standard Benchmarks: Presence of Noisy-OR Relations}

To evaluate the ability of our overall algorithm for $\problem$ (see Section~\ref{sec:overall-algorithm}) to learn networks with noisy-OR relations, we used standard datasets from the UCI Machine Learning Repository (\url{https://archive.ics.uci.edu/}). The datasets used were all binary or made binary. 

\begin{table}[htb]
\small
\centering
\begin{tabular}{cc}
\begin{tabular}{lrrrrr}\toprule
 Dataset & $n$ & $N$ & nodes  &	 ave. & max.   \\\midrule
 adult      & 14 & 32,561 &     0 &   0.0  &   0.0\\
 nltcs      & 16 &  3,236 &     0 &   0.0  &   0.0\\
 msnbc      & 17 & 58,265 &     0 &   0.0  &   0.0\\
 zoo        & 17 &    101 &     7 &  41.9  &  99.4\\
 letter     & 17 & 20,000 &    OT &    OT  &    OT\\  
 hepatitis  & 20 &    155 &    10 &  76.9  & 100.0\\
 parkinsons & 23 &    195 &    11 &  51.2  & 100.0\\
 sensors    & 25 &  5,456 &    OT &    OT  &    OT\\
 \bottomrule
\end{tabular}
&
\begin{tabular}{lrrrrr}\toprule
 Dataset & $n$ & $N$ & nodes  &	 ave. & max.   \\\midrule
 autos      & 26 &    159 &    13 &  76.0  & 100.0\\
 horse      & 28 &    300 &     3 &  97.4  & 100.0\\
 flag       & 29 &    194 &    10 &  77.6  & 100.0\\ 
 wdbc       & 31 &    569 &    OT &    OT  &    OT\\
 soybean    & 36 &    266 &     9 &  86.1  & 100.0\\
 alarm      & 37 &  1,000 &     2 &  28.8  &  56.4\\
 bands      & 37 &    277 &     8 &  63.4  & 100.0\\
 spectf     & 45 &    267 &     0 &   0.0  &   0.0\\
 \bottomrule
\end{tabular}
\end{tabular}
\caption{Total number of nodes where a noisy-OR relation is selected (nodes) and average (ave.) and maximum (max.) percentage of networks in the set of credible networks that select noisy-OR relations for these nodes, for various benchmarks with $n$ nodes and $N$ instances in the dataset. OT indicates a dataset that could not be solved within the time limit.}
\label{TABLE:standard-benchmarks}
\end{table}

The overall algorithm for $\problem$ was applied to a dataset to learn the set of credible networks using a Bayes Factor, $BF=20$. Out of the 13 (in a total of 16) benchmarks the algorithm was able to solve, 9 benchmarks showed a presence of noisy-OR relations (see Table~\ref{TABLE:standard-benchmarks}). Specifically, these 9 benchmarks had 2 or more nodes that were assigned noisy-OR relations in at least 28\% of the networks in the credible set. Also, 7 benchmarks had at least one node that was assigned a noisy-OR relation in all of the networks in the credible set. Note that some benchmarks, such as hepatitis and parkinsons, select noisy-OR relations for around half of their nodes, which shows that using only full CPTs could have resulted in overfitting. Further, optimal BNs containing noisy-OR relations were consistently found to have better scores than that of optimal networks found using only full CPTs. We also examined the effectiveness of the pruning rules (Steps 2 and 3 of the algorithm). On these benchmarks, the rules safely pruned away from 89.17\% to 99.99\% of the candidate parent sets, showing that the pruning rules are highly effective.

\subsection{Performance on Ground Truth Networks}

To further evaluate our overall algorithm for $\problem$ (see Section~\ref{sec:overall-algorithm}), we used real-world Bayesian networks from the Bayesian Network Repository (\url{www.bnlearn.com/bnrepository}). The variables in the networks were made binary and their corresponding CPTs compressed (see Table~\ref{TABLE:inference-error}; BNs without a \_b suffix were already binary). From each ground truth network, we randomly generated datasets with 100, 500, and 1000 samples. We then ran our structure learning algorithm on the datasets to learn the set of credible networks, fixed the CPT parameters using maximum likelihood estimation and measured relative inference error against the ground truth network.

Table \ref{TABLE:inference-error} shows the median relative inference error of the best scoring and the worst scoring networks in the set of credible networks, as well as that of the best-scoring network with full CPTs (i.e., not containing noisy-OR relations), against that of the ground truth network. Overall the inference error of the best scoring network is comparable to that of the full CPT. Somewhat surprisingly, the error for the worst scoring network can be smaller than for the best scoring network or the full CPT. 

\begin{table}[thb]
\small
\centering
\begin{tabular}{lrrrrrrrrrr}
\toprule
Bayesian & & \multicolumn{3}{c}{$N=100$} & \multicolumn{3}{c}{$N=500$} & \multicolumn{3}{c}{$N=1,000$}\\
network & $n$  & best & worst &CPT  &best  &worst &CPT &best & worst &CPT \\
\midrule
earthquake   &  5 & 0.03 & 0.91 &0.00  & 0.02 & 0.98 & 0.00 &0.26 & 0.53& 1.00\\
survey\_b    &  6 & 0.05 & 0.69 &0.00  &0.02& 0.74 &0.00  &0.01 & 0.75&0.00 \\
asia         &  8 & 0.04 & 0.13 &0.92  &0.04 & 0.92 &0.02 &0.02 & 0.90& 0.08\\
sachs\_b     & 11 & 0.43 & 0.68 & 0.18 &0.70 & 0.60 &0.21 &0.68 & 0.62&0.01\\
child\_b     & 20 & 0.05 & 0.91 &0.01 &0.05 & 0.88 &0.07 &0.05 & 0.85& 0.04\\
insurance\_b & 27 & 0.67 & 0.72 & 0.70 & 0.65 &0.71 &0.68 &0.65 &0.68& 0.68\\
alarm\_b     & 37 & 0.04 & 0.99 &0.01 &0.08 & 0.99 &0.05 &0.05 & OT &0.06\\
\bottomrule 
\end{tabular}
\caption{Median relative inference error for the best and worst scoring network in the set of credible networks learned by Algorithm \ref{alg:algorithm} and the full CPT against the ground truth network. The datasets with $N$ instances were generated from various ground truth BNs with $n$ nodes.}
\label{TABLE:inference-error}
\end{table}

To perform inference on our learned set of credible networks, we generated evidence for $10\%$ of nodes in the network. The nodes were randomly selected. For one trial, we selected a state of every node in the evidence, which was set according to the node’s posterior probability distribution in the model, conditional on the evidence observed up till this point. Then, we computed the posterior probability distributions over the non-evidence nodes for our learned network and for the ground truth network. The inference errors were the differences between these values. We repeated the described procedure 1000 times for each of the networks. Inference was performed using JavaBayes (\url{www.cs.cmu.edu/~javabayes}), which was extended to take in an evidence file and two BNs for comparison. 
Our results our consistent with \cite{zagorecki2013knowledge}, who show that in three real-world Bayesian networks, noisy-OR/MAX relations were a good fit for up to 50\% of the CPTs in these networks and that converting some CPTs to noisy-OR/MAX relations gave good approximations when answering probabilistic queries.



\section{Conclusion}

Existing successful approaches for learning Bayesian networks from data use the well-known score-and-search approach. We extend the score-and-search approach to simultaneously learn the best global structure and the best local structure when the choice is either a full CPT or a noisy-OR relation for a candidate parent set of a node in the network. We show how to score a causal noisy-OR relation for a candidate parent set by fitting the best possible noisy-OR to the data, and we show how to effectively prune the search space while maintaining the optimality of the networks that are learned. Our experimental results provide evidence of the effectiveness of our approach. In particular, it was found that noisy-OR relations appeared in a significant proportion of the learned networks, for well known datasets. 

\bibliography{probabilistic,scip}

\end{document}